\documentclass{article} %
\usepackage[T1]{fontenc}
\usepackage[preprint]{iclr2027_conference}
\usepackage{times}

\usepackage{amsthm,amsfonts,amssymb,amsmath,graphicx} 

\usepackage{amsmath,amsfonts,bm}

\def\eqref#1{equation~\ref{#1}}

\def\1{\bm{1}}

\DeclareMathAlphabet{\mathsfit}{\encodingdefault}{\sfdefault}{m}{sl}
\SetMathAlphabet{\mathsfit}{bold}{\encodingdefault}{\sfdefault}{bx}{n}

\usepackage{hyperref}
\usepackage{url}
\newtheorem{lemma}{Lemma}
\usepackage{booktabs}
\usepackage{graphicx}
\usepackage{float}
\usepackage{tikz}

\usepackage{enumerate}
\usepackage[noabbrev,nameinlink,capitalise]{cleveref}
\creflabelformat{equation}{#2\textup{#1}#3}

\usepackage{glossaries}
\glsdisablehyper
\newacronym{pinn}{PINN}{Physics-Informed neural network}
\newacronym{pde}{PDE}{Partial Differential Equation}
\newacronym{pihnn}{PIHNN}{Physics-Informed Holomorphic Neural Network}
\newacronym{mlp}{MLP}{MultiLayer Perceptron}

\title{Enriched Holomorphic Neural Networks\\for 2D and 3D Laplace Problems}

\author{Kalp Shah$^{1}$\quad Andrei Vlad Nicula$^{1}$\quad Mihnea Andrei B\^{a}rsan$^{1}$\quad Robert-\textcommabelow{S}tefan Sofroni$^{1}$\\
\textbf{Agneya Bhardwaj$^{1}$\quad Jan Jel\'{\i}nek$^{1}$\quad Alejandro Valverde Mahou$^{2}$}\\
$^{1}$University of Amsterdam\quad $^{2}$Technical University of Denmark\\
\texttt{\{kalp.shah, andrei.nicula, mihnea.barsan, robert.sofroni}\\
\texttt{\{agneya.bhardwaj, jan.jelinek\}@student.uva.nl}\quad
\texttt{avama@dtu.dk}}

\begin{document}

\maketitle

\begin{abstract}

Boundary-value problems for the Laplace equation develop singular behaviour near corners, edges and junctions that smooth approximators resolve poorly. We combine holomorphic neural networks with singular functions derived from the local geometry and boundary conditions. The representation is harmonic by construction, so no interior residual is needed: the network fits the smooth remainder from boundary data alone, and the enrichment terms capture the singular behaviour. Singular exponents are trainable, so the construction also applies when the local expansion is not known in advance. We extend the approach to three dimensions using independent networks evaluated along complex null directions. The enriched model is the most accurate tested method on every problem. Its margin is several orders of magnitude where the enrichment captures the singularity exactly, and a factor of about 1.7--2.1 over the next-best method on a CdZnTe radiation detector with many singular sites, in both two and three dimensions.
\end{abstract}

\section{Introduction}
\label{sec:introduction}

Solving \glspl{pde} is difficult on domains with non-trivial geometries or boundary conditions. \glspl{pinn} \citep{raissi2019} offer an alternative by incorporating the governing equations directly into the training objective. The solution is represented by a neural network, so it is continuous and its derivatives come from automatic differentiation. The method is therefore mesh-free and differentiable end-to-end, which motivates its use for irregular geometries and inverse problems \citep{luo2025}.

Unfortunately, the resulting objective suffers from competing loss terms, ill-conditioning and imbalanced gradients, which slow convergence or degrade accuracy \citep{wang2021,wang2022,rahaman2019}. Much work has therefore improved the optimization and representation of PINNs. Adaptive loss weighting balances the terms of the objective \citep{kendall2018,heydari2019,wang2022,mcclenny2023}, and adaptive collocation changes where the residual is evaluated \citep{wu2023}. Other approaches change the representation, through periodic activations or Fourier features \citep{sitzmann2020,liu2025}, or by replacing the multilayer perceptron with a Kolmogorov--Arnold network \citep{liu2024kan}. Domain-decomposition approaches such as conservative PINNs (cPINNs) \citep{jagtap2020conservative} and extended PINNs (XPINNs) \citep{jagtap2020extended} partition the domain into subdomains coupled through interface conditions. %

We address the representation side of these difficulties. We focus on the Laplace equation, $\Delta u = 0$, and build on the \glspl{pihnn} formulations introduced by \citet{calafa2024} for two-dimensional problems and further developed by \citet{ballini2026} in three dimensions. These representations have also been shown to benefit from enrichment with known singularities for specific problems, such as crack tips in plane elasticity \citep{calafa2025}. Our approach addresses the enrichment for a set of local boundary configurations rather than for one particular problem: we construct a dictionary mapping each local boundary configuration to its enrichment family, together with a rule for orienting the branch cuts so that the representation remains single-valued inside the domain. We develop the construction in two and three dimensions.

\section{Related work}
\label{sec:related}

Representations that satisfy the governing equation in the interior reduce a boundary value problem to determining model parameters from boundary information. Classical examples include the boundary element method, the method of fundamental solutions \citep{fairweather1998}, harmonic kernel collocation \citep{schaback2008}, and rational approximations such as the lightning Laplace solver \citep{gopal2019}. These methods construct the solution from fixed analytical families, such as harmonic polynomials or fundamental solutions centred outside the domain \citep{axler2001}. Neural representations instead learn part of the approximating family, at the cost of a nonconvex fit. \citet{ghosh2023} represent harmonic functions as the real part of a complex-valued neural network, while \citet{calafa2024} develop \glspl{pihnn} for two-dimensional problems. In 3D, where the correspondence with complex analysis is lost, \citet{ballini2026} write the solution as an integral over projection directions, approximate it by a weighted sum over a finite set of directions, and share one network across them.

Exact satisfaction of the governing equation, however, does not guarantee an efficient representation of solutions with reduced regularity. At corners and boundary-condition junctions, local singularities arise whose exponents depend on the geometry and boundary conditions \citep{grisvard1985}. Enriching the approximation space with known singular functions is a classical remedy in numerical \gls{pde} methods \citep{fix1973,li2000}. Among neural solvers, the singularity-enriched \gls{pinn} (SEPINN) of \citet{hu2024} augments a real-valued \gls{pinn} with singular functions for polygonal corners, mixed boundary conditions and edges in 3D, while \citet{calafa2025} enrich \glspl{pihnn} with the square-root term from Williams' expansion at crack tips. In these approaches, the singularity and its exponent are prescribed for the problem at hand.

\section{Method}
\label{sec:method}

\subsection{Problem formulation}

We consider the Laplace boundary-value problem on a bounded domain
$\Omega\subset\mathbb{R}^d$, with $d\in\{2,3\}$. The boundary is partitioned into disjoint Dirichlet and Neumann portions, $\partial\Omega=\Gamma_D\cup\Gamma_N$, and the solution $u:\Omega\rightarrow\mathbb{R}$ satisfies
\begin{equation}
    \begin{aligned}
    \Delta u(\bm{x}) &= 0, && \bm{x}\in\Omega,\\
    u(\bm{x}) &= g_D(\bm{x}), && \bm{x}\in\Gamma_D,\\
    \nabla u(\bm{x})\cdot\bm{n}(\bm{x}) &= g_N(\bm{x}), && \bm{x}\in\Gamma_N,
    \end{aligned}
    \label{eq:laplace-bvp}
\end{equation}
where $g_D$ and $g_N$ denote the prescribed Dirichlet and Neumann data, respectively, and $\bm{n}$ is the outward unit normal. We assume that $\Gamma_D$ has positive measure, so that the solution is unique.

We represent the boundary as a collection of segments in two dimensions or faces in three dimensions. We refer to points at which adjacent boundary pieces meet, or at which the boundary condition or its prescribed value changes, as \emph{junctions}. These local boundary configurations determine the singular terms used to enrich the neural representation.

\subsection{Holomorphic functions}

A function $f:U\to\mathbb C$ on an open set $U\subset\mathbb C$ is holomorphic if it is complex differentiable at every point of $U$. Writing $z=x+iy$ and $f(z)=p(x,y)+iq(x,y)$, and using the subscript notation for partial derivatives throughout so that $p_x=\frac{\partial p}{\partial x}$, for continuously differentiable $p$ and $q$, complex differentiability is equivalent to the Cauchy--Riemann equations $p_x=q_y$ and $p_y=-q_x$ \citep{axler2001, evans2010}. Differentiating these once more gives $\Delta p = p_{xx}+p_{yy} = q_{yx}-q_{xy} = 0$, and the same for $q$, so both parts of a holomorphic function are harmonic. Conversely, on a simply connected domain every harmonic function is the real part of a holomorphic one. A function that is holomorphic on all of $\mathbb{C}$ is called \emph{entire}.

Two properties of holomorphic functions are central to this work. First, they are closed under composition and linear combination: a network assembled from holomorphic layers is therefore holomorphic, and adding further holomorphic terms to its output keeps it so. Second, the real part of any holomorphic function is harmonic so a holomorphic network represents  a harmonic function for every value of its weights. Together these mean the network is harmonic by construction, with no interior residual required. The Cauchy--Riemann equations also give the complex derivative as $f'(z)=p_x-ip_y$, from which the normal 
derivative follows directly.

\subsection{Enrichment functions}
\label{sec:enrichment}

Near corners and junctions, solutions to the Laplace equation develop singular behaviour that smooth approximators resolve poorly: the solution may be bounded, but its gradient can blow up. Rather than relying on the
network to learn this from boundary samples, we augment the representation with enrichment functions drawn from local singular expansions for the given geometry and boundary conditions \citep{grisvard1985,hu2024,calafa2025}.

Consider a planar wedge with interior angle $\omega$ and polar coordinates $(r,\theta)$ centred at its vertex. For a homogeneous Dirichlet condition at $\theta=0$ and a homogeneous Neumann condition at $\theta=\omega$, the leading singular mode is $r^\lambda\sin(\lambda\theta)$ with $\lambda=\pi/(2\omega)$. The solution stays bounded as $r\to 0$, but its gradient grows as $r^{\lambda-1}$ and diverges whenever $\lambda<1$, i.e. when $\omega>\pi/2$. This is illustrated in \cref{fig:enrichment} for $\omega=2\pi/3$, $\lambda=3/4$, where the field looks smooth but the gradient concentrates at the vertex.

In the local complex coordinate $w=re^{i\theta}$, the singular modes are the real and imaginary parts of $ w^\lambda = \exp(\lambda\operatorname{Log}w)$.
We place the branch cut of $\operatorname{Log}$ along a fixed exterior direction of the junction (\cref{app:harmonicity}). The complex coefficient of each term then absorbs the phase of the wall from which $\theta$ is measured and the order of the two conditions.
At each junction there is not one singular exponent but a countably infinite sequence $\lambda_1<\lambda_2<\cdots$, each indexing a valid local mode. For a D$\to$N (Dirichlet to Neumann) junction at angle $\omega$, for instance, $\lambda_k=(2k-1)\pi/(2\omega)$. The leading term ($k=1$) is the most singular and dominates near the vertex, and higher terms are progressively smoother. The number of terms per junction is a hyperparameter we call the \emph{ladder depth}. We use ladder depth one except on Motz's problem and the cracked beam, where the first four exponents are included (\cref{app:training-2d}). \Cref{tab:dictionary} lists the enrichment terms.

\begin{figure}[ht]
  \centering
  \includegraphics[width=\linewidth]{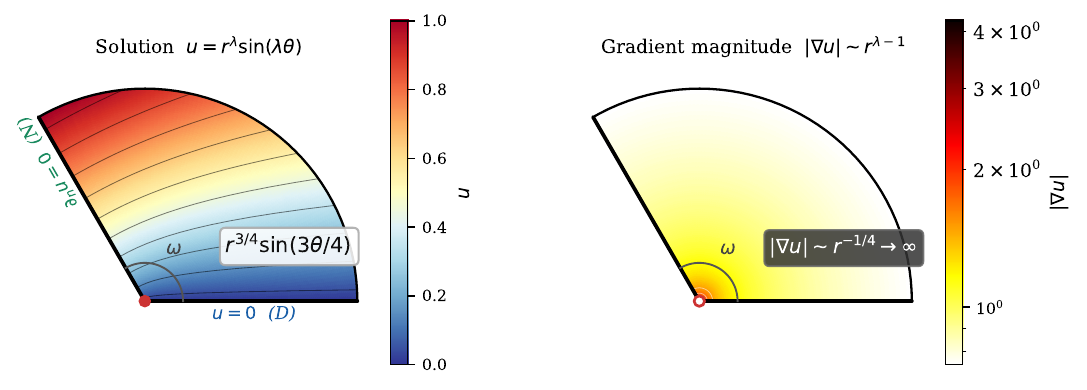}
  \caption{Singular behaviour at a D$\to$N junction with $\omega=2\pi/3$,
  $\lambda=3/4$. Left: the leading mode $r^\lambda\sin(\lambda\theta)$ is
  bounded at the vertex. Right: the gradient magnitude $|\nabla u|\sim
  r^{\lambda-1}=r^{-1/4}$ diverges there. The enrichment term $w^\lambda$
  represents this mode directly.}
  \label{fig:enrichment}
\end{figure}

\begin{table}[ht]
  \caption{Enrichment dictionary. Each junction matches one row and contributes the term $E_\lambda$ with a trainable coefficient $c_q$. Local coordinates are centred at the junction $z_c$, with $w=z-z_c=re^{i\theta}$; $\omega$ is the interior angle and $k=1,2,\dots$ indexes the ladder. Every term is holomorphic, so the representation remains exactly harmonic. \cref{fig:sitekind_classification} is a visualization of this table.}
  \label{tab:dictionary}
  \centering
  \small
  \begin{tabular}{llll}
    \toprule
    Junction & Local solution & $|\nabla u|$ & Term $E_\lambda$ \\
    \midrule
    Dirichlet to Neumann
      & $r^{\lambda}\sin(\lambda\theta)$, $\lambda=(2k-1)\pi/(2\omega)$
      & $r^{\lambda-1}$ & $w^{\lambda}$ \\
    Both Dirichlet, value jump $\Delta g$
      & $(\Delta g/\omega)\,\theta$ & $r^{-1}$ & $\arg w$ \\
    Both Neumann, flux jump, $\omega\in\pi\mathbb{N}$
      & $\Re[w\operatorname{Log}w]$ & $|\ln r|$ & $w\operatorname{Log}w$ \\
    Both Neumann, equal flux, $\omega>\pi$
      & $r^{\lambda}\cos(\lambda\theta)$, $\lambda=k\pi/\omega$
      & $r^{\lambda-1}$ & $w^{\lambda}$ \\
    Both Dirichlet, equal values, $\omega>\pi$
      & $r^{\lambda}\sin(\lambda\theta)$, $\lambda=k\pi/\omega$
      & $r^{\lambda-1}$ & $w^{\lambda}$ \\
    Otherwise & $C^\infty$ away from vertex & bounded & none \\
    \bottomrule
  \end{tabular}
\end{table}

\paragraph{Trainable exponents.}
The exponents in \cref{tab:dictionary} and \cref{fig:sitekind_classification} can be fixed at their theoretical values or trained. Since $w^\lambda$ is holomorphic for every $\lambda>0$, varying the exponent does not affect harmonicity, but only the exponents of the table satisfy the local boundary conditions, so a trained $\lambda$ is a fitted parameter rather than the singular exponent itself. Each term carries one real parameter $\tau$, with $\lambda=\varepsilon+\operatorname{softplus}(\tau)$ initialised at
the predicted value, except the conical degree, which starts at $1/2$. This allows the construction to be applied when the exponent is not known in advance, whereas prior enrichment schemes fix $\lambda$ \citep{hu2024,calafa2025}.

\paragraph{Extension to 3D.}
For a straight edge with unit tangent $t$ and transverse orthonormal frame $e_1,e_2$, we set
\begin{equation}
 w=(\bm{x}-\bm{x}_e)\cdot(e_1+ie_2), \qquad E_\lambda(\bm{x})=w^\lambda.
 \label{eq:edge}
\end{equation}
The direction $e_1+ie_2$ satisfies $a\cdot a=0$, so $E_\lambda$ is harmonic wherever its branch is holomorphic. %

\subsection{Enriched Holomorphic neural networks}

\glspl{pihnn} exploit the compositional property of holomorphic functions to construct a holomorphic neural representation. Let $f_\eta:\mathbb{C}\to\mathbb{C}$ denote such a network, with complex-valued parameters $\eta$. Since holomorphic functions are closed under composition, $f_\eta$ is holomorphic by construction, and its real part is therefore harmonic \citep{ghosh2023,calafa2024}.

The \gls{pihnn} represents the remaining part of the solution, and $Q$ enrichment terms $E_{\lambda_q}$, one per singular site and ladder index, represent the behaviour near the sites. The output of the model is
\begin{equation}
 u_{\eta,\bm{c}}(x,y)=\Re\left[f_\eta(z)+\sum_{q=1}^Q c_q E_{\lambda_q}(z)\right],
\label{eq:model2d}
\end{equation}
where $c_q$ determine the amplitude and phase of each term $E_{\lambda_q}$.

Since the representation is harmonic by construction, no interior residual is
needed, and the parameters are fitted to the boundary conditions by minimizing
\begin{equation}
\mathcal{L}(\eta,\bm{c}) =
\frac{1}{|X_D|}
\sum_{\bm{x}\in X_D}
\left\|
u_{\eta,\bm{c}}(\bm{x})-g_D(\bm{x})
\right\|^2
+
\frac{1}{|X_N|}
\sum_{\bm{x}\in X_N}
\left\|
\partial_n u_{\eta,\bm{c}}(\bm{x})-g_N(\bm{x})
\right\|^2,
\label{eq:boundary-fit}
\end{equation}
where $X_D$ and $X_N$ denote the corresponding boundary sample sets, and
a term is omitted when the corresponding boundary condition is absent.

\paragraph{Extension to 3D}
\label{sec:extension-3d}

In 3D there is no single complex coordinate, so we use complex projections of the form $a\cdot\bm{x}$, where $a\in\mathbb{C}^3$. For a holomorphic function $f$ and a fixed direction $a$, the chain rule gives
\begin{equation}
\Delta_{\bm{x}} f(a\cdot\bm{x}) = (a\cdot a)f''(a\cdot\bm{x}),
~~~~~ a\cdot a=\sum_{k=1}^{3}a_k^2.
\label{eq:chain}
\end{equation}
Thus, choosing a null direction satisfying $a\cdot a=0$ makes the projected function harmonic.

We use $K$ such projections and sum their outputs:
\begin{equation}
u_{\bm{\eta},\bm{c}}(\bm{x}) =
\Re\left[
\sum_{k=1}^{K} f_{\eta_k}(\zeta_k)
+
\sum_{q=1}^{Q} c_q E_{\lambda_q}(\bm{x})
\right],
\label{eq:model-3d}
\end{equation}

where $\zeta_k=a_k\cdot\hat{\bm{x}}+b_k$, with a complex offset $b_k$ and $\hat{\bm{x}}=(\bm{x}-\bm{x}_0)/L_0$ (\cref{sec:model-3d}). Each branch is harmonic, and so is their sum.
The normal derivative follows directly from the chain rule. For an outward unit normal $\bm{n}\in\mathbb{R}^3$,
\begin{equation}
\partial_n u_{\bm{\eta},\bm{c}}(\bm{x})=
\Re\left[
\frac{1}{L_0}\sum_{k=1}^{K}
(\bm{n}\cdot a_k)f'_{\eta_k}(\zeta_k)
+
\sum_{q=1}^{Q}
c_q\bm{n}\cdot\nabla_{\bm{x}}E_{\lambda_q}(\bm{x})
\right].
\label{eq:neumann-3d}
\end{equation}
The model is fitted with the same boundary loss as in 2D.

Since each network branch is entire and the enrichment terms are harmonic away from their branch cuts, the full representation is harmonic in $\Omega$ (\cref{app:harmonicity}). For pure Dirichlet problems, $\Gamma_D=\partial\Omega$, this bounds the interior error by the boundary misfit. The error $e=u_{\bm{\eta},\bm{c}}-u^\star$ is harmonic, and the junctions $J$, points in 2D and edges in 3D, form a polar set. If $e$ is bounded and continuous up to $\partial\Omega\setminus J$, the extended maximum principle (\citealp[Thm.~3.6.9]{ransford1995}; \citealp{armitage2001} in 3D) gives
\begin{equation}
  \sup_{\Omega}\left|u_{\bm{\eta},\bm{c}}-u^\star\right|
  \;\le\;
  \sup_{\partial\Omega\setminus J}\left|u_{\bm{\eta},\bm{c}}-g_D\right|.
  \label{eq:certificate}
\end{equation}
The error is bounded when every enrichment term is, which rules out a $\ln|w|$ component in the value-jump terms (\cref{app:harmonicity}). We evaluate the right-hand side on a finite boundary sample, which can only underestimate the supremum, so the reported value is an error estimate rather than a certified bound.

\section{Model Setup}
\label{sec:setup}

In both dimensions the model has the same three parts: a holomorphic core, the enrichment terms that \cref{tab:dictionary} assigns to the junctions, and a real part that turns their sum into a potential. The core is a small complex-valued network in which each layer applies a complex affine map followed by the complex exponential,
\begin{equation}
 h_0=\zeta,\qquad h_{\ell+1}=\exp(W_\ell h_\ell+b_\ell),
 \qquad f_\eta(\zeta)=W_{\mathrm{out}}h_m+b_{\mathrm{out}},
\label{eq:holo-network}
\end{equation}
where $\zeta$ is a complex scalar, $m$ is the number of hidden layers, the exponential acts on each component, and $\eta$ collects the complex weights and biases. The exponential is entire, so $f_\eta$ is entire for any value of $\eta$.

\paragraph{2D model}
\label{sec:model-2d}

In two dimensions the input is the normalized coordinate $\hat z=(z-z_0)/L_0$, with $z_0$ the centre of the domain and $L_0$ its characteristic radius. The core has $m=2$ hidden layers of width 16 with complex exponential activations. The enrichment terms are evaluated in the physical coordinate, since their sites, cut directions and interior angles come from the geometry of the problem, and one term is placed per junction and ladder index. All 2D problems have unit length scale, so $w$ needs no separate reference length.

\paragraph{3D model}
\label{sec:model-3d}

The 3D model first projects the point onto several complex coordinates. \Cref{fig:model3d} shows how a point $\bm{x}$ becomes a potential $u(\bm{x})$.

\begin{figure}[ht]
  \centering
   \scalebox{1}{
  \begin{tikzpicture}[
      box/.style={draw, rounded corners=2pt, align=center, font=\small, inner sep=4pt},
      op/.style={draw, circle, font=\small, inner sep=1.5pt},
      arr/.style={->, >=latex, semithick}]
    \node (x) at (0,0) {$\bm{x}$};
    \node[box] (proj) at (2.0,0) {centre, scale\\and project};
    \node[box] (f1) at (4.7,0.85) {$f_{\eta_1}(\zeta_1)$};
    \node (dots) at (4.7,0.08) {$\vdots$};
    \node[box] (fK) at (4.7,-0.75) {$f_{\eta_K}(\zeta_K)$};
    \node[op] (sum) at (6.6,0) {$\Sigma$};
    \node[op] (plus) at (7.9,0) {$+$};
    \node[box] (enr) at (7.9,-1.75) {enrichment $\sum_q c_q E_{\lambda_q}(\bm{x})$};
    \node[box] (re) at (9.3,0) {$\Re[\,\cdot\,]$};
    \node (u) at (10.7,0) {$u(\bm{x})$};
    \draw[arr] (x) -- (proj);
    \draw[arr] (proj.east) -- node[above, font=\scriptsize, pos=0.55] {$\zeta_1$} (f1.west);
    \draw[arr] (proj.east) -- node[below, font=\scriptsize, pos=0.55] {$\zeta_K$} (fK.west);
    \draw[arr] (f1.east) -- (sum);
    \draw[arr] (fK.east) -- (sum);
    \draw[arr] (sum) -- (plus);
    \draw[arr] (x.south) |- (enr.west);
    \draw[arr] (enr.north) -- (plus.south);
    \draw[arr] (plus) -- (re);
    \draw[arr] (re) -- (u);
  \end{tikzpicture}
  }
   \caption{Forward pass of the 3D model. The point is projected onto $K$ complex coordinates $\zeta_k=a_k\cdot\hat{\bm{x}}+b_k$ along null directions, each coordinate is passed through its own holomorphic network, and the outputs are summed. The enrichment terms are evaluated at the physical point and added before the real part is taken.}
  \label{fig:model3d}
\end{figure}

The point is centred and scaled, $\hat{\bm{x}}=(\bm{x}-\bm{x}_0)/L_0$, and projected onto $K$ complex coordinates $\zeta_k=a_k\cdot\hat{\bm{x}}+b_k$, with a complex offset $b_k$. By \cref{eq:chain}, a holomorphic function of $\zeta_k$ is harmonic in $\bm{x}$ when $a_k\cdot a_k=0$, so we write each direction as
\begin{equation}
 a(\xi,\rho)=\rho\left(\frac{1-\xi^2}{2},\;\frac{i(1+\xi^2)}{2},\;\xi\right),
 \qquad \xi,\rho\in\mathbb{C},
\label{eq:null-direction}
\end{equation}
which is null for every $\xi$ and $\rho$. The directions can therefore be trained freely. The same length scale is used for all coordinates, since a separate scale per coordinate would break the null condition. Each $\zeta_k$ is passed through its own network of the form \cref{eq:holo-network}, with two hidden layers of width 8, and the $K=16$ outputs are added. This sum is a discrete counterpart of the integral over directions used by \citet{ballini2026}, with separate parameters for each direction instead of one network shared across them. The number and size of the branches are the same for all problems (\cref{app:architecture}).

The enrichment terms are evaluated at the physical point, since their location is part of the geometry. A straight edge uses the term $w^\lambda$ of \cref{eq:edge}, with the exponent that \cref{tab:dictionary} gives for its dihedral angle and the conditions on its two faces. A circular conical vertex uses the separated mode
\begin{equation}
 S_\nu(R,\vartheta)=R^\nu P_\nu(\cos\vartheta),
\label{eq:cone-3d}
\end{equation}
where $R$ is the distance from the vertex, $\vartheta$ the angle from the cone axis, and $\nu$ the first positive root of $P_\nu(\cos\vartheta_{\mathrm{wall}})=0$. A trained $\nu$ enters both factors, and $P_\nu$ is evaluated by a 400-term hypergeometric series whose tail is below $10^{-27}$ on the domain. Site positions, edge frames and branch cuts are fixed by the geometry and are not trained, since moving them could place a singular line inside the domain. \Cref{app:example-3d} writes out the model for the L-shaped prism.

\subsection{Parameter fitting}
\label{sec:parameter-fitting}

The model is fitted to boundary samples only. Both loss terms of \cref{eq:boundary-fit} are made dimensionless by a characteristic potential $U_0$ and length $L_0$: the Dirichlet term is divided by $U_0^2$ and the Neumann term multiplied by $L_0^2/U_0^2$; both scales are one on the sampled suite. Enrichment coefficients start at zero and exponents at the values of \cref{tab:dictionary} (the conical degree at $1/2$); exponents are fixed on Motz's problem, the cracked beam and the sampled suite. All parameters are optimized jointly with \textsc{Adam} followed by \textsc{L-BFGS}.

Every sample carries gradient on $\lambda$ through $\partial_\lambda w^\lambda=w^\lambda\operatorname{Log}w$, but far from a site the change is smooth and the rest of the model can absorb it, so we expect uniform boundary samples to carry little information about the exponent. Since the coefficients start at zero, an exponent also receives no gradient until its coefficient is nonzero. We therefore add, except on the sampled suite, samples whose distance to each junction or edge is drawn log-uniformly between $10^{-4}$ and $10^{-1}$ of the local length. Step counts and sample sizes are given in \cref{app:training-2d,app:training-3d}.

\subsection{Evaluation}
\label{sec:evaluation}

Where a reference solution $u^\star$ is available, we report the 
relative $L^2$ error
\begin{equation}
 \epsilon_u(X)=\left(\frac{\sum_{\bm{x}\in X}|u(\bm{x})-u^\star(\bm{x})|^2}
 {\sum_{\bm{x}\in X}|u^\star(\bm{x})|^2}\right)^{1/2},
\label{eq:evaluation-errors}
\end{equation}
estimated on a set $X$ of interior points. For the CZT detector, where 
the reference is numerical, we report the root-mean-square error instead. 
Where no closed-form solution exists, $u^\star$ is a singular-series or 
numerical reference, described with each problem. Unless stated otherwise, 
results are medians over three random seeds.
\section{Experiments in two dimensions}
\label{sec:experiments-2d}

\subsection{Problems}

We use six named benchmarks, each with a single singular site (\cref{app:benchmarks-2d}). Four have a closed-form solution: two circular sectors, an L-shaped domain and a slit disk, whose solution is the leading wedge mode of the opening angle \citep{grisvard1985}. The other two, Motz's problem \citep{motz1947} and the cracked beam \citep{lu2004}, are scored against their truncated singular expansion, whose leading coefficient $D_0$ is known to 17 significant digits \citep{lu2004}.

As an application we use a CdZnTe (CZT) drift strip radiation detector \citep{KUVVETLI2010486} geometry. CZT detectors are semiconductor gamma-ray detectors that operate at room temperature, and the electric potential inside the crystal determines how the charge released by a photon drifts to the electrodes. The electrodes on the crystal faces are held at fixed potentials and separated by insulating gaps, so every electrode edge is a junction between a Dirichlet and a Neumann boundary. A single detector therefore contains many singular sites of the same type. In 2D we use a cross-section of the detector, scored by the root-mean-square error against a numerical reference (\cref{app:czt-2d}). To test many sites on simpler geometries, we also use a suite of 24 randomly sampled problems with 4 to 12 sites each (\cref{app:suites-2d}).

\subsection{Results}

\paragraph{}{Baselines}
We compare three representations. The \gls{mlp} \gls{pinn} is a real-valued network trained on the Laplace residual at interior points and on the boundary misfit, with the weighting tuned per problem. The plain holomorphic model is \cref{eq:model2d} with $Q=0$, and the enriched model adds the terms of \cref{tab:dictionary}. The last two are the same network trained with the same budget and restarts, so they differ only in the enrichment.
\begin{table}[ht]
  \caption{Errors in 2D. All entries report $\epsilon_u$, median over 
  three seeds at the best budget of \cref{app:training-2d}, chosen by 
  the same rule for every method. \emph{Plain} and \emph{Enriched} are 
  the same network without and with the terms of \cref{tab:dictionary}.}
  \label{tab:benchmarks2d}
  \centering
  \small
  \begin{tabular}{lccc}
    \toprule
    Problem & \gls{mlp} \gls{pinn} & Plain & Enriched \\
    \midrule
    \multicolumn{4}{l}{\emph{Closed-form solution}} \\
    \quad Sector $3\pi/2$ & $9.5\times10^{-2}$ & $7.7\times10^{-4}$ & $\mathbf{3.3\times10^{-12}}$ \\
    \quad Sector $7\pi/4$ & $1.8\times10^{-1}$ & $1.9\times10^{-2}$ & $\mathbf{1.4\times10^{-16}}$ \\
    \quad L-shape         & $7.4\times10^{-2}$ & $7.5\times10^{-4}$ & $\mathbf{2.9\times10^{-16}}$ \\
    \quad Slit disk       & $1.8\times10^{-1}$ & $3.2\times10^{-1}$ & $\mathbf{2.8\times10^{-12}}$ \\
    \addlinespace
    \multicolumn{4}{l}{\emph{Series reference}} \\
    \quad Motz            & $1.7\times10^{-1}$ & $5.4\times10^{-4}$ & $\mathbf{2.5\times10^{-8}}$ \\
    \quad Cracked beam    & $8.6\times10^{-2}$ & $6.4\times10^{-4}$ & $\mathbf{8.6\times10^{-8}}$ \\
    \addlinespace
    \multicolumn{4}{l}{\emph{Application}} \\
    \quad CZT detector    & $7.8\times10^{-1}$ & $1.7\times10^{-2}$ & $\mathbf{1.0\times10^{-2}}$ \\
    \bottomrule
  \end{tabular}
\end{table}

The plain holomorphic model improves on the \gls{mlp} \gls{pinn} by a factor of 9 to 315 on five of the six benchmarks (\cref{tab:benchmarks2d}). The two configurations also differ in precision, architecture and boundary sampling (\cref{app:training-2d}), so this compares solver configurations rather than isolating exact harmonicity. The exception is the slit disk, where the plain model is worse than the \gls{mlp} \gls{pinn}: its solution vanishes on both slit faces but its transverse Cartesian derivative has opposite one-sided limits. An entire core extends smoothly across the slit and cannot reproduce this discontinuity, but the branch enrichment represents it directly.

\begin{figure}[ht]
  \centering
  \includegraphics[width=0.49\linewidth]{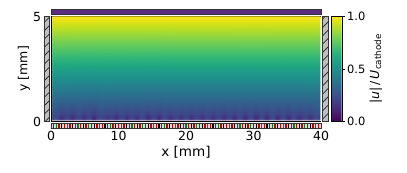}\hfill
  \includegraphics[width=0.49\linewidth]{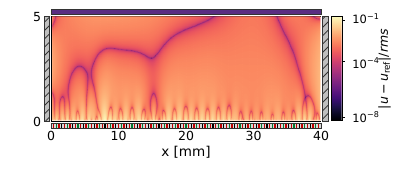}\\[1pt]
  \includegraphics[width=\linewidth]{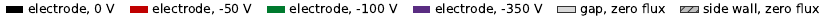}
  \vspace{-12pt}
  \caption{Enriched model on the 2D CZT detector. Left: normalized potential. Right: error against the reference}
  \label{fig:czt2d-slice}
\end{figure}

The enrichment gives a further 4 to 14 orders of magnitude, and the enriched model reaches roundoff on the $7\pi/4$ sector and the L-shape. On these closed-form benchmarks the solution is itself a dictionary term, so they test recovery of the supplied mode. On Motz's problem and the cracked beam it recovers $D_0$ to relative errors of $8.2\times10^{-7}$ and $2.0\times10^{-6}$, with the network supplying the correction that reduces the error of the same four singular terms fitted alone, $5.5\times10^{-3}$ and $2.2\times10^{-2}$ (\cref{app:benchmarks-2d}), to the reported values. With the same network and budget, tripling the \textsc{Adam} steps and boundary samples improves the plain model by less than an order of magnitude, while the enriched model improves by several (\cref{app:ablation-enrichment-2d}). On the sampled suite, the enrichment 
improves all 72 paired runs and reduces the median $\epsilon_u$ by a factor of $3.2\times10^{6}$ on the four closed-form instances.

On the detector (\cref{fig:czt2d-slice}), the plain model reduces the error of the \gls{mlp} \gls{pinn} by a factor of 46, and the enrichment reduces it by a further factor of 1.7. The enrichment gain is much smaller than on the benchmarks: the terms of many junctions must be fitted together with the smooth field between the electrodes. The enriched model is nevertheless 78 times more accurate than the \gls{mlp} \gls{pinn} on this detector cross-section.

On the pure Dirichlet benchmarks, the sampled right-hand side of \cref{eq:certificate} is close to the true error: wherever the error lies above roundoff, the largest interior error of the enriched model reaches 94\% to 100\% of it, making it a useful estimate when no reference solution is available.

\section{Experiments in three dimensions}
\label{sec:experiments-3d}

\subsection{Problems}

We group the problems as in 2D (\cref{app:problems-3d}). Four have a closed-form solution. The L-shaped prism has the cross-section of a re-entrant wedge and the solution $g(x,y)=r^{2/3}\sin(2\theta/3)$, which does not depend on the axial coordinate. Two further fields on the same prism vary along the edge,
\begin{equation}
 u_A(x,y,z)=(1+0.5z)g(x,y)+0.1xyz,\qquad
 u_B(\bm x)=u_A(\bm x)+\frac{0.1}{\|\bm x-(-2,-2,2)\|},
 \label{eq:targets-3d}
\end{equation}
and a smooth harmonic field on the unit cube has no singular site at all, so it tests the core alone. Two problems are scored against a series reference: the conical notch, whose solution is the vertex mode of \cref{eq:cone-3d}, and the extruded Motz problem, which adds zero-flux end faces to the planar problem and uses a 60-term singular series. As an application we use the full 3D CZT detector, posed in millimetres and volts, whose 238 electrode edges are all Dirichlet to Neumann junctions. It is scored against a graded finite-element reference whose own error is about $10^{-4}$.

\begin{table}[t]
  \caption{Relative error $\epsilon_u$ in 3D, median over three converged 
  seeds at each method's best schedule. The detector is scored against the 
  finite-element reference. $^\ddagger$Extension of WAN to mixed boundary 
  data.}
  \label{tab:benchmarks3d}
  \centering
  \small
  \setlength{\tabcolsep}{5pt}
  \begin{tabular}{lccccc}
    \toprule
    Problem & \gls{pinn} & Deep Ritz & WAN & Ballini et al. & Enriched \\
    \midrule
    \multicolumn{6}{l}{\emph{Closed-form solution}} \\
    \quad L-prism       & $8.7\times10^{-4}$ & $2.7\times10^{-3}$ & $5.6\times10^{-3}$ & $1.3\times10^{-3}$ & $\mathbf{6.9\times10^{-9}}$ \\
    \quad $u_A$         & $8.9\times10^{-4}$ & $4.5\times10^{-3}$ & $1.2\times10^{-2}$ & $1.9\times10^{-3}$ & $\mathbf{1.8\times10^{-4}}$ \\
    \quad $u_B$         & $1.1\times10^{-3}$ & $5.9\times10^{-3}$ & $1.2\times10^{-2}$ & $1.6\times10^{-3}$ & $\mathbf{1.6\times10^{-4}}$ \\
    \quad Smooth cube   & $1.5\times10^{-5}$ & $5.9\times10^{-4}$ & $2.2\times10^{-3}$ & $2.0\times10^{-4}$ & $\mathbf{4.6\times10^{-6}}$ \\
    \addlinespace
    \multicolumn{6}{l}{\emph{Series reference}} \\
    \quad Conical notch & $1.2\times10^{-3}$ & $1.0\times10^{-2}$ & $7.4\times10^{-2}$ & $4.1\times10^{-2}$ & $\mathbf{6.0\times10^{-9}}$ \\
    \quad Extruded Motz & $1.1\times10^{-2}$ & $4.1\times10^{-3}$ & $^\ddagger$$3.4\times10^{-1}$ & $5.8\times10^{-3}$ & $\mathbf{3.4\times10^{-6}}$ \\
    \addlinespace
    \multicolumn{6}{l}{\emph{Application}} \\
    \quad CZT detector  & $7.5\times10^{-2}$ & $2.7\times10^{-2}$ & $^\ddagger$$7.5\times10^{-2}$ & $2.4\times10^{-2}$ & $\mathbf{1.3\times10^{-2}}$ \\
    \bottomrule
  \end{tabular}
\end{table}

\subsection{Results}
\paragraph{Baselines}

We compare against four published mesh-free methods that impose the equation in different ways. The \gls{pinn} \citep{raissi2019} minimizes the strong-form residual, the Deep Ritz method \citep{e2018} minimizes the Dirichlet energy, and the weak adversarial network (WAN) of \citet{zang2020} minimizes the weak form against an adversarially trained test function. The holomorphic network of \citet{ballini2026} is exactly harmonic but has no singular terms. Each baseline follows its published protocol, with the authors' code where it runs on our problems, and is also run with its steps and boundary samples multiplied by 1.5, 2 and 3. We report the best implementation and budget of each method (\cref{app:training-3d}).

None of the four baselines reaches an error below $8\times10^{-4}$ on any problem with a singularity (\cref{tab:benchmarks3d}), and the exactly harmonic network of \citet{ballini2026} is not consistently more accurate than the \gls{pinn}. As in 2D, exact harmonicity is not sufficient without the singular terms. With them, the enriched model outperforms the best baseline by 5 or more orders of magnitude on the L-prism and the conical notch, whose solution is the supplied mode, and by three orders of magnitude on the extruded Motz problem.

\begin{figure}[ht]
  \centering
  \includegraphics[width=0.49\linewidth]{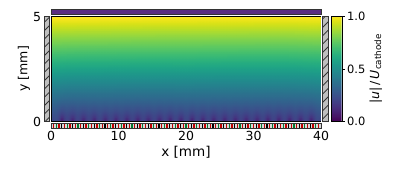}\hfill
  \includegraphics[width=0.49\linewidth]{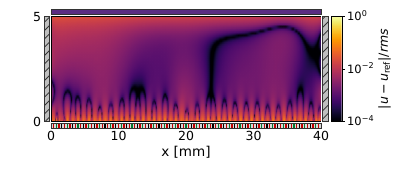}\\[1pt]
  \includegraphics[width=\linewidth]{images/depth_slice_legend.pdf}
  \vspace{-12pt}
  \caption{Enriched model on the 3D CZT detector, plane $z=21$\,mm. Left: normalized potential. Right: error against the reference, scaled by the root mean square of the reference solution.}
  \label{fig:czt3d-slice}
\end{figure}

The gain is smaller when the solution varies along the edge. The edge terms carry a constant amplitude, so the network must carry the axial variation of $u_A$ and $u_B$, and the margin falls to a factor of 5 to 7. On the smooth cube there is nothing to enrich, and the enriched model is three times more accurate than the \gls{pinn}.

On the detector, the enriched model reaches $1.3\times10^{-2}$, against $2.7\times10^{-2}$ for Deep Ritz and $7.5\times10^{-2}$ for the \gls{pinn}. The network of \citet{ballini2026} reaches $2.4\times10^{-2}$. The smaller margin may reflect the 238 edge terms fitted together with the field of the whole crystal, the unenriched strip ends, or conditioning. It is still the lowest potential error of the tested methods on this geometry.

\section{Conclusion}
\label{sec:conclusion}

We have shown that building the analytical structure of the Laplace equation 
directly into the hypothesis space is more effective than enforcing it through 
a residual penalty. By leveraging holomorphic network whose real part is harmonic by construction and using a set of singular enrichment terms we remove the need to use the interior points residual during training and handle difficult local behaviours that smooth neural methods cannot handle correctly. Furthermore, we extend the formulation to three dimensions through null complex projections. Together they reduce errors by orders of magnitude over both residual-based and exactly harmonic baselines across a range of 2D and 3D problems, with the largest gains where the dictionary captures the solution's singular behaviour exactly.

The advantage shrinks when many junctions share the domain, when edge 
amplitudes vary axially, or when the domain geometry falls outside the 
tested configurations. Several extensions follow naturally from the 
construction. A dictionary for general polyhedral vertices would extend 
the 3D results beyond the edge and conical cases. Edge terms with an 
affine axial amplitude would close the remaining gap on problems where 
the singular strength varies along the edge. More broadly, since the 
Laplace equation is linear, a single model with $N$ output heads sharing 
the same holomorphic core and enrichment structure could represent $N$ 
solutions on the same geometry simultaneously, as the junction structure 
and branch cuts are identical for all of them and only the smooth 
remainder and enrichment coefficients differ. For the CZT detector, where 
100 weighting potentials share the same electrode geometry, this would 
reduce training cost by an order of magnitude while exploiting the shared 
singular structure across all outputs.

\clearpage

\subsection*{Acknowledgements}

This work originated as a course project at the University of Amsterdam. We thank Chris Hoes, Remco Hogerwerf, Finn Govers, Storm Koper, Gerrit Janssen and Stef de Wildt, who were part of that project and contributed to the initial ideation and code development. This work was partially supported by the \textit{i-RASE} project, funded by the European Union under the Horizon Europe EIC Pathfinder Open programme (Grant Agreement No. 101130550).

\bibliography{iclr2027_conference}
\bibliographystyle{iclr2027_conference}

\appendix
\crefalias{section}{appendix}

\clearpage

\section{Harmonicity and branch cuts}
\label{app:harmonicity}

This section states when the model of \cref{sec:method} is harmonic and how the enrichment terms are placed so that the conditions hold.

\begin{lemma}[Harmonicity]
Assume that each network branch is entire and that every enrichment term is twice continuously differentiable and harmonic in $\Omega$. In 3D, also assume $a_k\cdot a_k=0$ for every projection direction. Then the representations in \cref{eq:model2d,eq:model-3d} are harmonic in $\Omega$.
\end{lemma}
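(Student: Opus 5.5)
The plan is to use linearity of the Laplacian: treat each summand of \cref{eq:model2d} and \cref{eq:model-3d} separately, show that each is a $C^2$ harmonic function on $\Omega$, and conclude that the finite sum is harmonic. Taking the real part commutes with $\Delta$ because $\Delta$ has real coefficients: if $F=P+iQ$ is $C^2$ with $\Delta F=0$ as a complex-valued function, then $\Delta P=\Re\,\Delta F=0$. It therefore suffices to show that each complex summand is annihilated by $\Delta$. The coefficients $c_q\in\mathbb C$ are harmless, since $\Delta(c_qE)=c_q\Delta E$.

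\emph{2D.} The network $f_\eta$ is entire by hypothesis; for the architecture \cref{eq:holo-network} this follows because compositions and affine combinations of entire functions are entire. Writing $f_\eta=p+iq$, the Cauchy--Riemann equations give $\Delta p=0$, as in the computation of the holomorphic-functions subsection. The smoothness needed to differentiate twice is available because holomorphic functions are $C^\infty$. For the enrichment terms we use the hypothesis directly: each $E_{\lambda_q}$ is $C^2$ and harmonic on $\Omega$. One could also note that this holds for $w^\lambda$ or $w\operatorname{Log}w$ when the branch cut of $\operatorname{Log}$ lies outside $\Omega$, but the lemma assumes it. Summing the pieces gives $\Delta u_{\eta,\bm c}=0$ on $\Omega$.

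\emph{3D.} For each branch, write $\zeta_k=a_k\cdot\hat{\bm x}+b_k$ with $\hat{\bm x}=(\bm x-\bm x_0)/L_0$, so that $\partial_{x_j}\zeta_k=a_{k,j}/L_0$. Here $f_{\eta_k}$ is entire, and we regard $\bm x\mapsto f_{\eta_k}(\zeta_k(\bm x))$ as a complex-valued function of real variables. The step that needs care is justifying the chain rule for a holomorphic function composed with a complex-valued affine map of real variables. The justification is that $\zeta_k$ is real-affine in $\bm x$ and $f$ is complex differentiable, so $\partial_{x_j}[f(\zeta_k)]=f'(\zeta_k)\,\partial_{x_j}\zeta_k$ holds exactly; this is the Wirtinger chain rule with $\partial_{\bar\zeta}f=0$. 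Applying it twice gives \cref{eq:chain} with the factor $L_0^{-2}$:
\[
\Delta f_{\eta_k}(\zeta_k)=L_0^{-2}(a_k\cdot a_k)f''_{\eta_k}(\zeta_k).
\]
This vanishes by the null assumption $a_k\cdot a_k=0$. The same computation applies to the parametrization \cref{eq:null-direction}, for which the null condition holds identically. The enrichment terms in 3D are again $C^2$ and harmonic on $\Omega$ by hypothesis. The complex-valued Laplacian of the full bracket is therefore zero, and taking real parts finishes the proof.

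The only potential obstacle is the chain-rule step in 3D, where one must note that $a_k$ is complex while $\bm x$ is real; differentiating through the real and imaginary parts handles it. The remaining difficulty, namely showing that branch cuts avoid $\Omega$ so that the enrichment hypothesis holds, is excluded from this lemma by assumption.
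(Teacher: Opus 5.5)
Your proposal is correct and follows the same route as the paper. You use linearity of the Laplacian, the harmonicity of the real part of the entire 2D core, the identity \cref{eq:chain} with the extra factor $L_0^{-2}$ from centring and scaling in 3D, and the hypothesis for the enrichment terms; you additionally spell out details the paper leaves implicit, namely that $\Re$ commutes with $\Delta$ and that the chain rule holds for an entire function composed with a complex-valued real-affine map.
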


\begin{proof}
In 2D, the real part of each holomorphic term is harmonic. In 3D, each branch has zero Laplacian by \cref{eq:chain}, and the centring and scaling of the input only multiply it by $L_0^{-2}$. The enrichment terms are harmonic by assumption, and taking real parts and summing preserves harmonicity by linearity of the Laplacian.
\end{proof}

The network branches satisfy the lemma automatically; the enrichment terms satisfy it only if their branch cuts stay outside the domain. We therefore orient each cut along a direction pointing out of the domain: the outward normal on a segment or face, and the exterior bisector at a corner (\cref{fig:sitekind_classification}). On the convex domains (the rectangles, the square and disk of the suite, the detector) such a cut meets the closure of the domain only at its site. On the sectors and the L-shape it runs through the removed wedge, on the slit disk it lies on the slit, and on the L-prism it lies in the removed quadrant. The cone mode has no cut and is singular only on the ray $\vartheta=\pi$, outside the domain. Site positions and orientations are fixed during training.

The cone mode is evaluated by the series $P_\nu(\cos\vartheta)={}_2F_1(-\nu,\nu+1;1;(1-\cos\vartheta)/2)$ truncated at 400 terms. Its argument is at most $0.854$ on the domain, so the truncation error is below $10^{-27}$.

The value-jump term of \cref{tab:dictionary} is $\arg w=\Re(-i\operatorname{Log}w)$, which carries the jump in the data and stays bounded at the junction. A general complex coefficient on $\operatorname{Log}w$ would add a radial part, since $\Re(c\operatorname{Log}w)=\Re(c)\ln|w|-\Im(c)\arg w$, and $\ln|w|$ is unbounded at the junction. This is the component excluded in \cref{sec:method}: with it, the error would be unbounded and \cref{eq:certificate} would not apply.

\begin{figure}[!ht]
  \centering
  \includegraphics[width=0.72\linewidth]{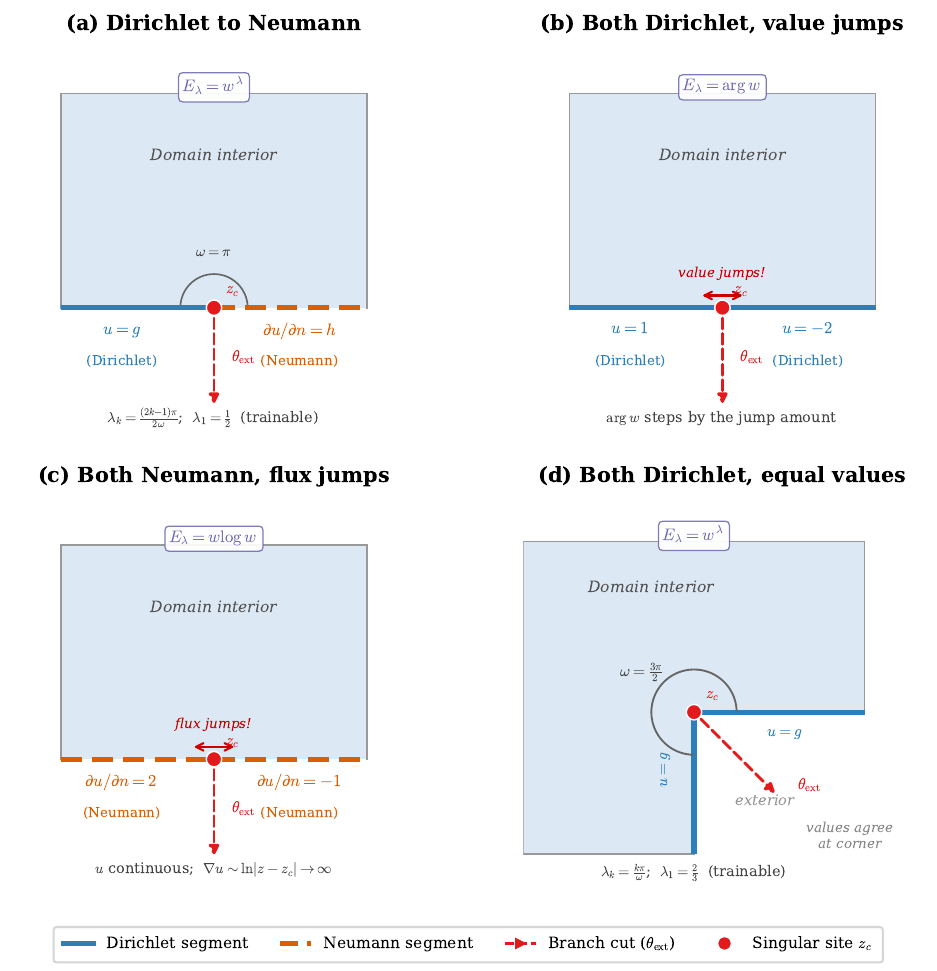}
  \caption{Enrichment terms and branch cuts for the rows of \cref{tab:dictionary}. \textbf{(a)} A Dirichlet to Neumann junction gives $w^\lambda$. \textbf{(b)} A Dirichlet value jump gives $\arg w$, which carries the jump. \textbf{(c)} A Neumann flux jump gives $w\operatorname{Log}w$. \textbf{(d)} A re-entrant corner with equal Dirichlet values gives $w^\lambda$; the equal-flux Neumann row has the same term. Each cut leaves the site along the exterior direction.}
  \label{fig:sitekind_classification}
\end{figure}

\section{Choice of architecture}
\label{app:architecture}

The 2D network size and optimizer settings come from the ablation of \cref{tab:ablation}. The 3D size, 16 branches of two hidden layers of width eight, was fixed after a preliminary single-seed screen of 135 configurations on four development problems, varying depth, width, activation, initialization, the number of branches and the treatment of the directions. We keep one configuration per dimension for all problems.

\section{Named benchmarks in two dimensions}
\label{app:benchmarks-2d}

\Cref{tab:benchmarks} defines the six benchmarks. Each has a single singular site; every other junction has an integer exponent, so \cref{tab:dictionary} emits no term there.

\begin{table}[ht]
  \caption{The named benchmarks. $\lambda$ is the leading exponent at the singular site. The reference coefficients are from \citet{lu2004}.}
  \label{tab:benchmarks}
  \centering
  \small
  \begin{tabular}{llll}
    \toprule
    Problem & Domain & $\lambda$ & Reference \\
    \midrule
    Sector $3\pi/2$ & $r<1$, $0<\theta<3\pi/2$ & $2/3$ & exact \\
    Sector $7\pi/4$ & $r<1$, $0<\theta<7\pi/4$ & $4/7$ & exact \\
    L-shape & $(-1,1)^2\setminus[0,1]\times[-1,0]$ & $2/3$ & exact \\
    Slit disk & $r<1$ cut along $0\le x<1$, $y=0$ & $1/2$ & exact \\
    Motz & $(-1,1)\times(0,1)$ & $1/2$ & $D_0=401.162453745234416$ \\
    Cracked beam & $(-1,1)\times(0,1)$ & $1/2$ & $D_0=540.56512271362749$ \\
    \bottomrule
  \end{tabular}
\end{table}

The first four carry Dirichlet data from the exact solution on the whole boundary. On the sectors, the data are zero on both radii and $r^\lambda\sin(\lambda\theta)$, $\lambda=\pi/\omega$, on the arc. On the L-shape, they are zero on the two edges meeting at the re-entrant corner and $r^{2/3}\sin(2\theta/3)$ on the outer edges. On the slit disk, they are zero on both faces of the slit and $\sin(\theta/2)$ on the rim, so that $u=r^{1/2}\sin(\theta/2)$.

Motz's problem and the cracked beam have mixed data. Both have $u=0$ on the bottom edge for $x<0$ and zero flux there for $x>0$. Motz's problem has zero flux on the top and left edges and $u=500$ on the right edge; the cracked beam has zero flux on the left and right edges and $u=500$ on the top edge. Both are scored against the series $\sum_{i=0}^{59}D_ir^{i+1/2}\cos((i+\frac12)\theta)$, which satisfies the bottom-edge conditions exactly, with coefficients fitted by least squares on the other three edges. It reproduces the published $D_0$ to relative errors of $7\times10^{-16}$ and $1.1\times10^{-15}$. Since the model's exponents are fixed on these problems, its leading coefficient compares directly with $D_0$. \Cref{tab:enrichment-only} shows how far the singular terms go without the network.

\begin{table}[ht]
  \caption{Least-squares fit of the first $n$ singular terms without the network: $\epsilon_u$ on 20,000 random interior points and the relative error of $D_0$. The model uses $n=4$.}
  \label{tab:enrichment-only}
  \centering
  \small
  \begin{tabular}{rcccc}
    \toprule
     & \multicolumn{2}{c}{Motz} & \multicolumn{2}{c}{Cracked beam} \\
    $n$ & $\epsilon_u$ & $D_0$ error & $\epsilon_u$ & $D_0$ error \\
    \midrule
    1  & $1.9\times10^{-1}$ & $7.9\times10^{-2}$ & $3.0\times10^{-1}$ & $1.5\times10^{-1}$ \\
    4  & $5.5\times10^{-3}$ & $4.4\times10^{-3}$ & $2.2\times10^{-2}$ & $1.1\times10^{-2}$ \\
    8  & $1.9\times10^{-4}$ & $3.5\times10^{-5}$ & $4.4\times10^{-4}$ & $8.6\times10^{-6}$ \\
    12 & $1.6\times10^{-5}$ & $5.1\times10^{-7}$ & $2.6\times10^{-5}$ & $2.8\times10^{-7}$ \\
    \bottomrule
  \end{tabular}
\end{table}

\begin{figure}[!ht]
    \centering
    \includegraphics[width=0.95\linewidth]{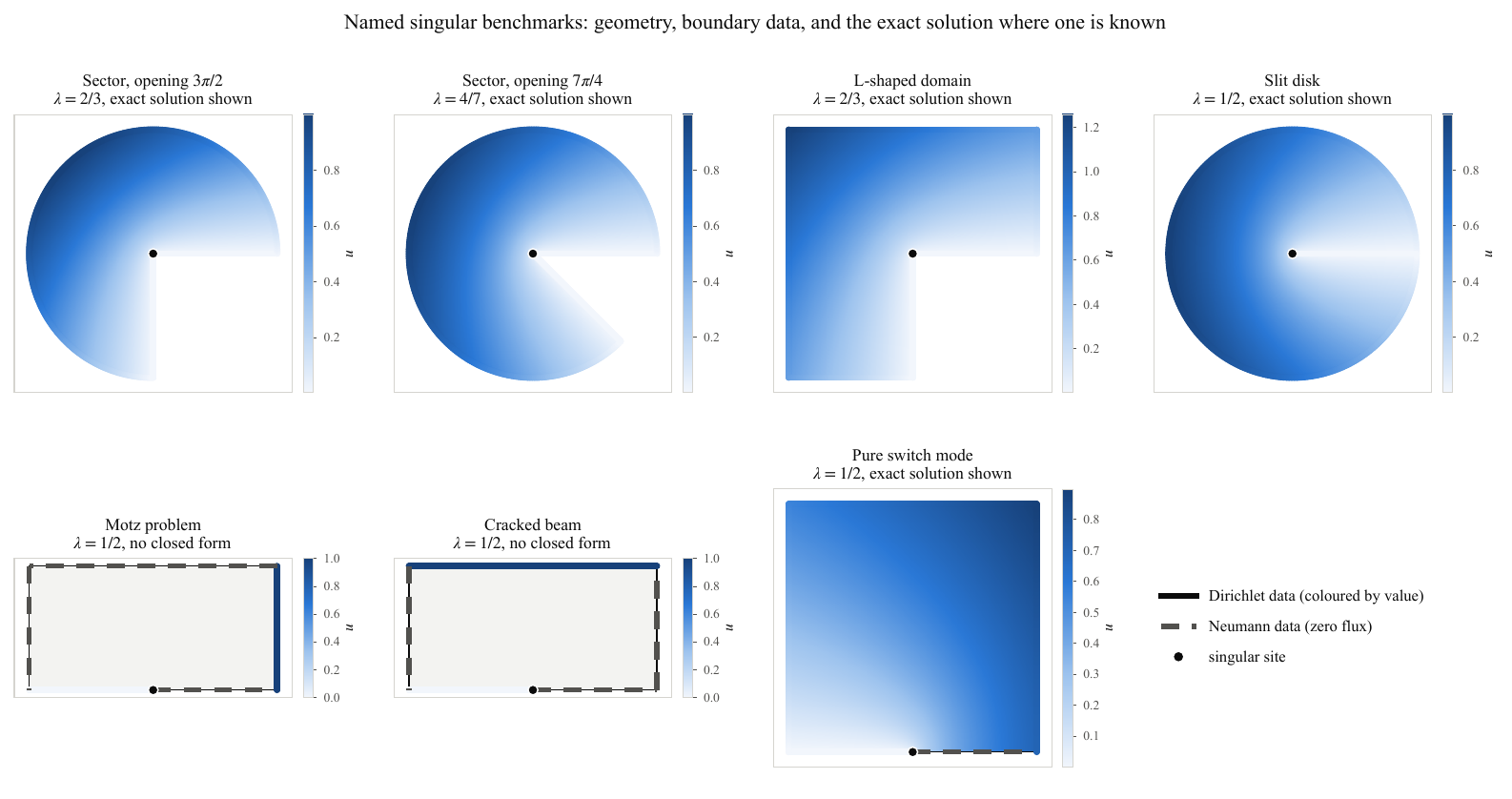}
    \caption{The six benchmarks of \cref{tab:benchmarks}, with the exact solution drawn where one exists.}
    \label{fig:benchmarks_overview}
\end{figure}

\section{CZT detector cross-section}
\label{app:czt-2d}

The 2D detector is a cross-section, 40\,mm wide and 5\,mm thick, of the crystal of \cref{app:problems-3d}. One face carries 24 anode strips at 0\,V, 0.2\,mm wide at a pitch of 1.6\,mm, with 25 groups of three drift strips at $-50$, $-100$ and $-50$\,V between them. The opposite face is a single cathode at $-350$\,V, and potentials are divided by $-350$\,V. The gaps and side walls carry zero flux, so each of the 198 strip edges is a Dirichlet to Neumann junction and receives one term $w^\lambda$, with $\lambda$ trained from $1/2$. The reference is a finite-difference solution, and \cref{tab:czt-2d} gives the errors on its grid.

\begin{table}[ht]
  \caption{Potential errors on the CZT cross-section, in units of 350\,V, median over three seeds.}
  \label{tab:czt-2d}
  \centering
  \small
  \begin{tabular}{lcc}
    \toprule
    Model & MSE & RMSE \\
    \midrule
    MLP PINN & $2.40\times10^{-1}$ & $4.90\times10^{-1}$ \\
    Plain    & $1.18\times10^{-4}$ & $1.09\times10^{-2}$ \\
    Enriched & $\mathbf{4.19\times10^{-5}}$ & $\mathbf{6.47\times10^{-3}}$ \\
    \bottomrule
  \end{tabular}
\end{table}

\section{Sampled suite}
\label{app:suites-2d}

The suite tests many sites on simple geometries. It has 24 instances, twelve on the unit square and twelve on the unit disk, in three groups of eight: Dirichlet data with value jumps, mixed Dirichlet and Neumann data, and configurations with many segments (\cref{fig:suite_eval}). The instances have 4 to 12 sites, with a median of 6.5, and are drawn with seed 20260713; the tuning suite of \cref{tab:ablation} is drawn separately with seed 20260804. The four Dirichlet-jump instances on the disk have a closed-form solution and are scored by $\epsilon_u$, the eight Dirichlet-jump instances also by the sampled right-hand side of \cref{eq:certificate}, and all instances by their boundary loss. Exponents are fixed at their dictionary values and no near-site samples are used.

\begin{figure}[!ht]
    \centering
    \includegraphics[width=0.72\linewidth]{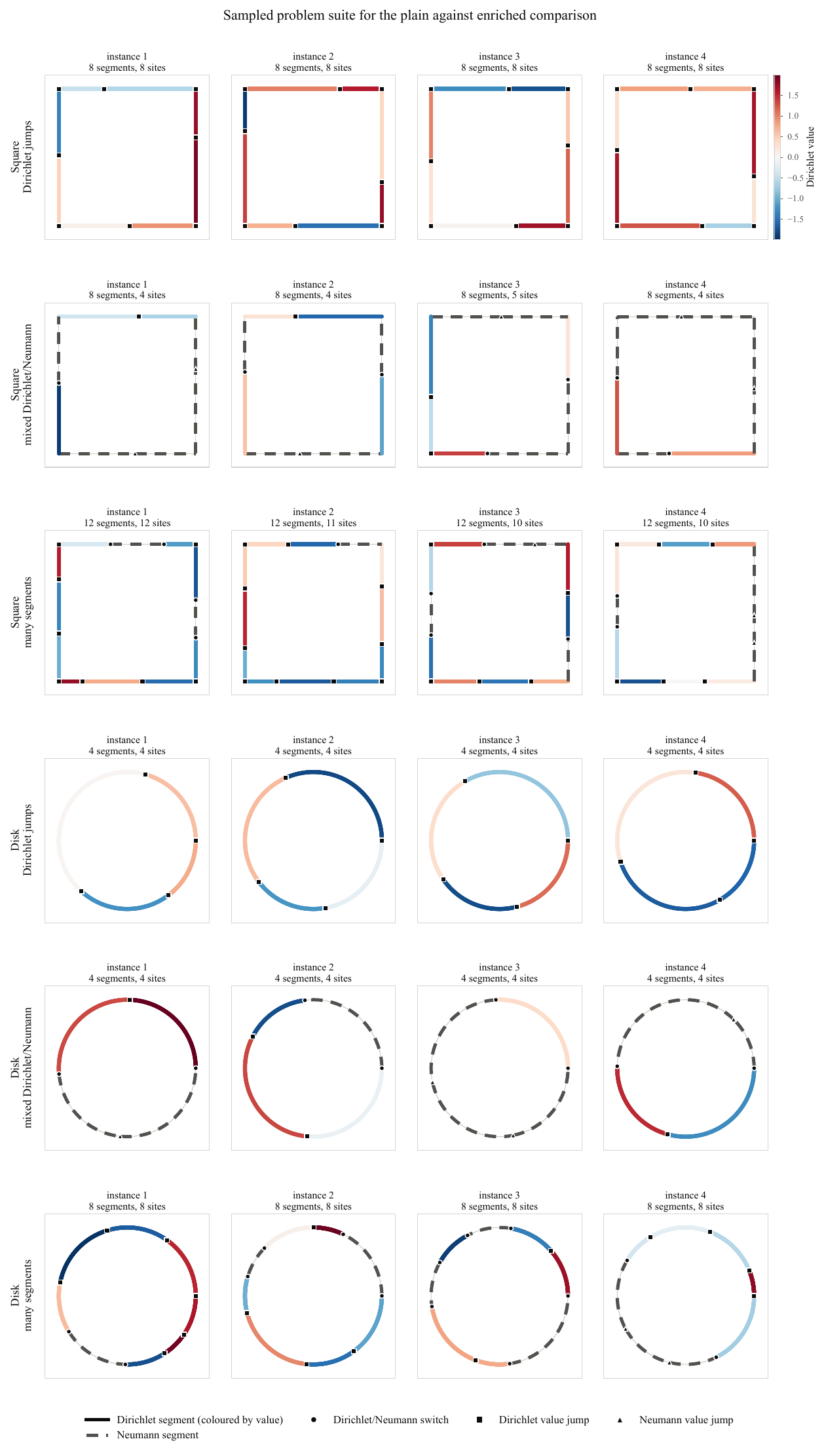}
    \caption{The 24 instances of the sampled suite. Dirichlet segments are coloured by their prescribed value, and Neumann segments are dashed. Markers show the sites the dictionary emits, with the shape giving the row of \cref{tab:dictionary}.}
    \label{fig:suite_eval}
\end{figure}

\section{Training in two dimensions}
\label{app:training-2d}

The model is implemented in JAX in double precision, since the reported errors go well below the $10^{-7}$ resolution of single precision. Weights are drawn from the scaled complex He initialization of \citet{calafa2024}. Training runs $4\times10^{4}$ full-batch Adam steps at learning rate $10^{-2}$ with cosine decay, followed by at most $6\times10^{3}$ L-BFGS iterations with a history of 100 \citep{liu1989}, and keeps the best of three restarts. Each segment receives 1,024 uniform samples, and each endpoint 128 further samples at log-uniform distances between $10^{-4}$ and $10^{-1}$ of the segment length. The sectors, the L-shape, the slit disk and the detector train their exponents; Motz's problem and the cracked beam fix them at $1/2$, $3/2$, $5/2$ and $7/2$.

The budget ladder multiplies the Adam steps and the samples per segment by 1, 1.5, 2 and 3, and each rung is run with seeds 0, 1 and 2. \Cref{tab:benchmarks2d} reports the best rung of each method. All other settings were chosen on the tuning suite (\cref{tab:ablation}), which is disjoint from the reported problems.

\paragraph{Baseline.} The MLP PINN is a real-valued tanh network with one hidden layer of width 128, trained in single precision on 5,000 interior points and $400b$ boundary points at budget $b$. Its loss is the Laplace residual plus the boundary misfit, with boundary weight 10 on the sectors and 100 elsewhere, minimized by Adam at learning rate $10^{-3}$ followed by 6,000 L-BFGS iterations, on the same budget ladder and seeds.

\begin{table}[ht]
  \caption{Ablation on the tuning suite (six instances, three seeds, 18 runs per row), changing one factor at a time from a starting configuration of width 32, depth 3, learning rate 0.003, 20\,000 Adam steps, 6\,000 L-BFGS steps, 512 points per segment, ladder depth 1 and fixed exponents. \emph{Ratio} is the median boundary loss relative to the starting configuration; below one is an improvement. \emph{Converged} counts runs with boundary loss below $10^{-6}$. The configuration of \cref{sec:model-2d} combines the five best changes.}
  \label{tab:ablation}
  \centering
  \small
  \begin{tabular}{lrrrr}
    \toprule
    Configuration & Boundary loss & Ratio & Converged & Time (s) \\
    \midrule
    starting configuration & $6.4 \times 10^{-5}$ & 1.00 & 3/18 & 402 \\
    \midrule
    depth 2 & $3.3 \times 10^{-6}$ & 0.03 & 7/18 & 280 \\
    learning rate 0.01 & $8.1 \times 10^{-6}$ & 0.14 & 7/18 & 381 \\
    Adam steps 40\,000 & $1.4 \times 10^{-5}$ & 0.41 & 5/18 & 644 \\
    width 16 & $1.4 \times 10^{-5}$ & 0.42 & 7/18 & 275 \\
    points per segment 1\,024 & $8.5 \times 10^{-6}$ & 0.43 & 6/18 & 645 \\
    depth 4 & $6.0 \times 10^{-5}$ & 0.84 & 1/18 & 512 \\
    L-BFGS steps 12\,000 & $3.8 \times 10^{-5}$ & 0.85 & 3/18 & 505 \\
    initialization empirical & $7.6 \times 10^{-5}$ & 0.95 & 3/18 & 381 \\
    trained exponents & $5.1 \times 10^{-5}$ & 0.98 & 3/18 & 457 \\
    ladder depth 2 & $6.9 \times 10^{-5}$ & 1.00 & 3/18 & 415 \\
    ladder depth 4 & $8.0 \times 10^{-5}$ & 1.00 & 3/18 & 487 \\
    L-BFGS steps 3\,000 & $7.2 \times 10^{-5}$ & 1.13 & 3/18 & 325 \\
    learning rate 0.001 & $6.5 \times 10^{-5}$ & 1.38 & 4/18 & 385 \\
    Adam steps 10\,000 & $1.4 \times 10^{-4}$ & 2.33 & 1/18 & 254 \\
    width 64 & $1.4 \times 10^{-4}$ & 2.36 & 3/18 & 1\,079 \\
    Adam steps 1\,000 & $3.8 \times 10^{-4}$ & 3.54 & 4/18 & 138 \\
    Adam steps 3\,000 & $5.3 \times 10^{-4}$ & 7.07 & 2/18 & 165 \\
    points per segment 256 & $1.6 \times 10^{-3}$ & 11.39 & 3/18 & 238 \\
    plain model & $4.0 \times 10^{-3}$ & 60.97 & 0/18 & 369 \\
    \bottomrule
  \end{tabular}
\end{table}

\section{Plain against enriched}
\label{app:ablation-enrichment-2d}

The plain and enriched models share the network, budget, restarts and seeds, and differ only in the singular terms. On the named benchmarks (\cref{fig:ladder_arms}), tripling the budget improves the plain model by less than an order of magnitude and the enriched model by several. On the sampled suite (\cref{fig:pairs}), all 72 paired runs improve with enrichment. The median paired ratio is $1.5\times10^{3}$ for the boundary loss, $1.1\times10^{7}$ for the sampled right-hand side of \cref{eq:certificate} on the eight Dirichlet-jump instances, and $3.2\times10^{6}$ for $\epsilon_u$ on the four closed-form instances. The gain is largest on value jumps and smallest on the many-segment instances.

To compare the error with the sampled bound of \cref{eq:certificate}, each Dirichlet segment is sampled at $2\times10^{4}$ points, excluding points within $10^{-3}$ of a site. On the two sectors, the largest error of the enriched model on a $400\times400$ interior grid reaches 94\% to 100\% of this bound; on the L-shape and the $7\pi/4$ sector both are at roundoff. For the plain model the ratio is 83\% to 90\% on all four.

\begin{figure}[!ht]
    \centering
    \includegraphics[width=\linewidth]{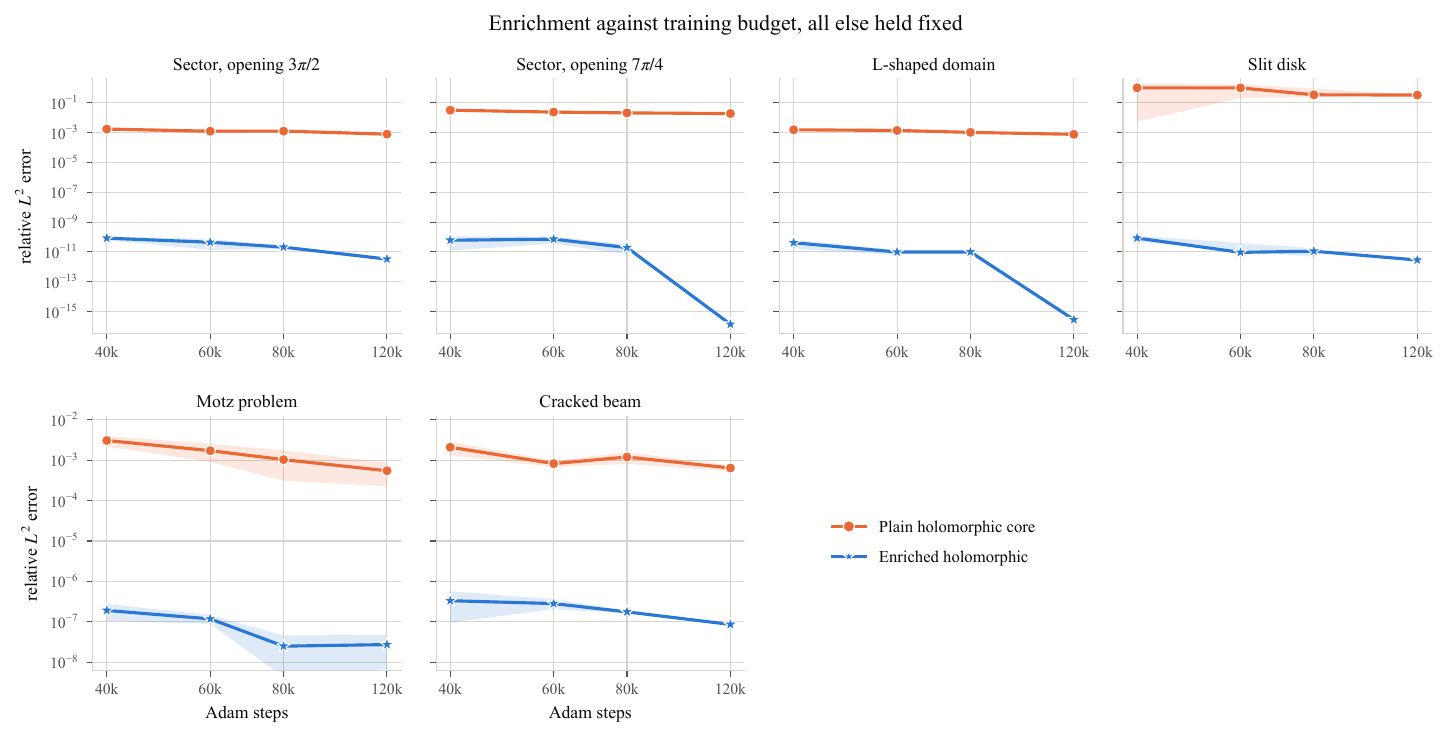}
    \caption{The plain and enriched models across the budget ladder on the named benchmarks, median over three seeds with the spread shaded.}
    \label{fig:ladder_arms}
\end{figure}

\begin{figure}[!ht]
    \centering
    \includegraphics[width=\linewidth]{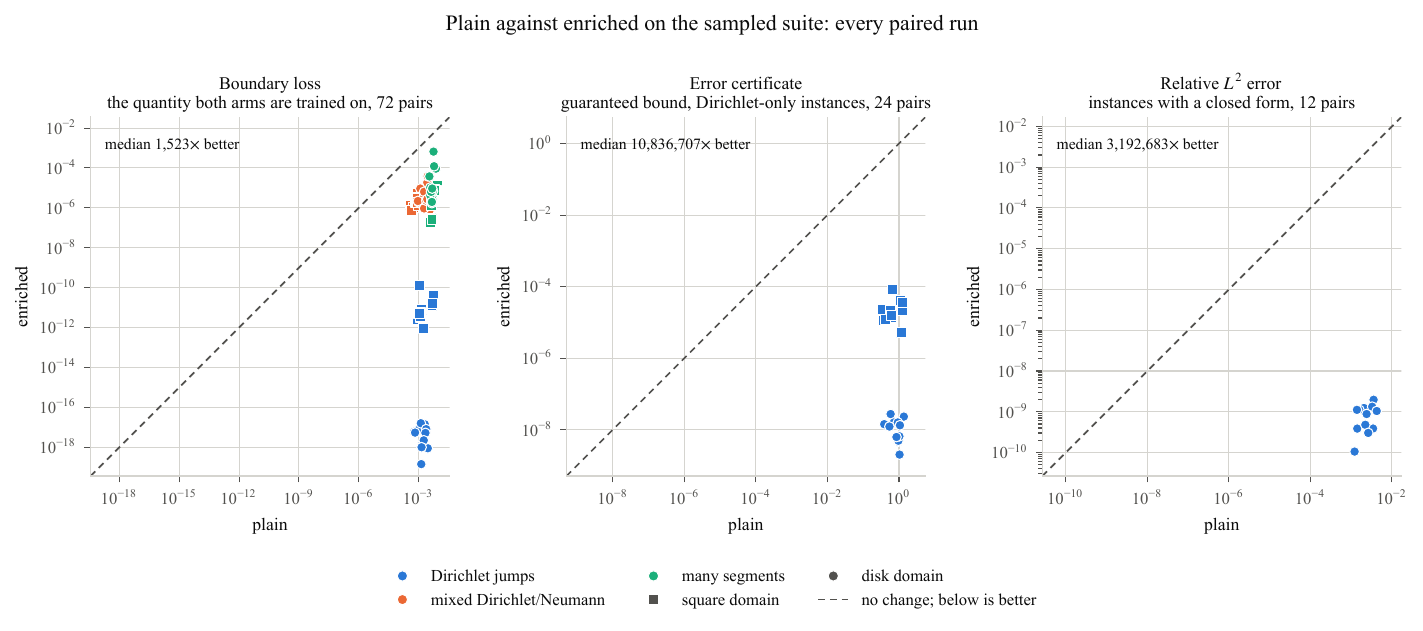}
    \caption{Plain against enriched over the sampled suite, one point per instance and seed. Points below the diagonal improved with enrichment; annotations give the median paired ratio.}
    \label{fig:pairs}
\end{figure}

\paragraph{Ladder depth.}
\Cref{tab:ladder} adds terms on the $7\pi/4$ sector, whose solution is the leading mode alone. The leading exponent stays within $2\times10^{-8}$ of its true value at every depth, while the surplus exponents cluster and carry amplitudes that cancel in pairs. The error stays near $10^{-10}$, so extra terms neither help nor harm here. These runs use budget one, which is why the depth-one error is above that of \cref{tab:benchmarks2d}.

\begin{table}[ht]
  \caption{Ladder depth on the $7\pi/4$ sector, $\lambda^\star=4/7$. $\lambda_1$ is the trained exponent closest to $\lambda^\star$, the gap is the smallest separation between trained exponents, and the spurious amplitude is the largest $|c_q|$ on a non-leading term.}
  \label{tab:ladder}
  \centering
  \small
  \begin{tabular}{rcccc}
    \toprule
    Terms & $|\lambda_1-\lambda^\star|$ & Smallest gap & Spurious $|c_q|$ & $\epsilon_u$ \\
    \midrule
    1 & $4.6\times10^{-11}$ & --                 & 0                  & $5.7\times10^{-11}$ \\
    2 & $2.3\times10^{-10}$ & $6.3\times10^{-1}$ & $1.9\times10^{-9}$ & $3.4\times10^{-11}$ \\
    3 & $7.4\times10^{-10}$ & $9.5\times10^{-8}$ & $3.7\times10^{-2}$ & $2.2\times10^{-11}$ \\
    4 & $8.8\times10^{-10}$ & $1.0\times10^{-7}$ & $1.8\times10^{-1}$ & $3.0\times10^{-11}$ \\
    6 & $2.0\times10^{-8}$  & $2.5\times10^{-6}$ & $1.0\times10^{-1}$ & $1.3\times10^{-10}$ \\
    8 & $1.9\times10^{-9}$  & $4.7\times10^{-7}$ & $2.2\times10^{-1}$ & $1.1\times10^{-10}$ \\
    \bottomrule
  \end{tabular}
\end{table}

\section{The 3D model on the L-shaped prism}
\label{app:example-3d}

On the L-shaped prism the only singular site is the re-entrant edge, with $\omega=3\pi/2$ and zero Dirichlet data on both walls. \Cref{tab:dictionary} assigns it the term $w^{2/3}$, and the model is
\begin{equation*}
 u(\bm{x})=\Re\left[\sum_{k=1}^{16}f_{\eta_k}(\zeta_k)+c\,w^{2/3}\right].
\end{equation*}
The exact solution $r^{2/3}\sin(2\theta/3)$ equals $\Re[-i\,w^{2/3}]$, so $c=-i$ represents it exactly and the branches need only represent zero. On $u_A$ and $u_B$ the singular amplitude varies along the edge, and the branches must carry that variation.

\section{Three-dimensional problems}
\label{app:problems-3d}

\paragraph{Closed-form solution.}
The L-prism is the L-shape of \cref{tab:benchmarks} extruded over $0<z<1$, with Dirichlet data from the exact solution on every face; $u_A$ and $u_B$ use the same geometry. The smooth cube is $[0,1]^3$ with Dirichlet data from $1/\|\bm x-(1.6,1.6,1.6)\|$, whose pole lies outside the cube.

\paragraph{Series reference.}
The conical notch is the unit ball with a cone of half-angle $\pi/4$ removed around the positive $z$ axis, so that, measured from the negative axis, the wall lies at $\vartheta=3\pi/4$. The data are zero on the wall and exact on the sphere, and the solution is \cref{eq:cone-3d} with $\nu=0.4630985617801066$; the trained degree, started at $1/2$, reaches $0.463099$. The extruded Motz problem has the planar Motz data on its cross-section and zero flux on its end faces. Its 60-term series reference changes by $5.5\times10^{-13}$ when reduced to 40 terms and matches the published $D_0$ to $7\times10^{-16}$.

\paragraph{CZT detector.}
The detector is a $40\times5\times40$\,mm crystal, posed in millimetres and volts. One face carries the anode and drift strips of \cref{app:czt-2d}, running the full length of the crystal, and the opposite face 20 cathode strips at $-350$\,V, 1.8\,mm wide at a pitch of 2\,mm. The gaps and the four sides carry zero flux, so each of the 119 electrodes has two Dirichlet to Neumann edges on a flat face, 238 in total, each with one edge term and $\lambda$ trained from $1/2$; the strip ends are not enriched. Zero flux idealizes an insulating surface, and space charge is neglected, so this is a numerical study of the operating potential rather than a validation of the device. The reference is a graded finite-element solution whose own error is about $10^{-4}$.

\section{Training and evaluation in three dimensions}
\label{app:training-3d}

\Cref{tab:protocol3d} lists the settings of the enriched model. The core has 3,104 real parameters; with the directions and enrichment terms the model has between 3,200 and 4,390, depending on the number of sites. \Cref{tab:seeds3d} gives the per-seed errors behind \cref{tab:benchmarks3d}.

\begin{table}[ht]
  \caption{Settings of the enriched model in 3D.}
  \label{tab:protocol3d}
  \centering
  \small
  \begin{tabular}{ll}
    \toprule
    Setting & Value \\
    \midrule
    Branches / hidden layers / width & $16/2/8$ \\
    Adam steps & 120,000, learning rate $0.01$, cosine decay \\
    Maximum L-BFGS iterations & 4,000 \\
    Uniform boundary samples & 1,536 per face (1,024 on the detector) \\
    Near-site samples & 128 per site, $10^{-4}$ to $10^{-1}$ of the local length \\
    Restarts & 3 per seed, selected on a fresh boundary sample \\
    Seeds & $0,1,2$ \\
    Precision & double \\
    Evaluation & 10,000 interior points per problem \\
    \bottomrule
  \end{tabular}
\end{table}

\begin{table}[ht]
  \caption{Per-seed relative error $\epsilon_u$ of the enriched model in \cref{tab:benchmarks3d}.}
  \label{tab:seeds3d}
  \centering
  \small
  \begin{tabular}{lccc}
    \toprule
    Problem & Seed 0 & Seed 1 & Seed 2 \\
    \midrule
    L-prism       & $1.3\times10^{-8}$ & $1.7\times10^{-9}$ & $6.9\times10^{-9}$ \\
    $u_A$         & $1.8\times10^{-4}$ & $1.9\times10^{-4}$ & $1.6\times10^{-4}$ \\
    $u_B$         & $1.8\times10^{-4}$ & $1.6\times10^{-4}$ & $1.6\times10^{-4}$ \\
    Smooth cube   & $4.4\times10^{-6}$ & $4.9\times10^{-6}$ & $4.6\times10^{-6}$ \\
    Conical notch & $4.5\times10^{-9}$ & $6.0\times10^{-9}$ & $6.1\times10^{-9}$ \\
    Extruded Motz & $3.4\times10^{-6}$ & $2.1\times10^{-6}$ & $4.6\times10^{-6}$ \\
    \bottomrule
  \end{tabular}
\end{table}

\paragraph{Baselines.}
Each baseline runs under its own published protocol: the architecture, optimizer, sampling and step counts of its paper, with the authors' code where it runs on our problems. It is also run with its steps and boundary samples multiplied by 1.5, 2 and 3, and \cref{tab:benchmarks3d} reports its lowest three-seed median. Deep Ritz uses three residual blocks of width ten with cubic ReLU activations and a boundary penalty of $10^3$. WAN is formulated for Dirichlet data; on the extruded Motz problem and the detector, the zero-flux condition enters through the weak form, and we mark these entries as an extension. Baselines that require unit scales are fitted on the detector with lengths divided by 5\,mm and potentials by 350\,V, and mapped back before scoring.

\end{document}